\documentclass[runningheads]{llncs}

\usepackage{eccvabbrv}

\usepackage{graphicx}
\usepackage{booktabs}
\usepackage{booktabs}
\usepackage{multirow}
\usepackage{xcolor}
\usepackage{colortbl}
\usepackage{xspace}
\usepackage{booktabs}
\usepackage{multirow}
\usepackage{adjustbox}
\usepackage{xcolor}
\usepackage{colortbl}

\usepackage[accsupp]{axessibility}  

\usepackage{hyperref}

\usepackage{orcidlink}

\newcommand{\ourmethod}{DiReCT }

\begin{document}

\title{\ourmethod: Disentangled Regularization of Contrastive Trajectories for Physics-Refined Video Generation}

\titlerunning{\ourmethod: Physics-Refined Video Generation}

\author{Abolfazl Meyarian\inst{1}\textsuperscript{\dag} \and
Amin Karimi Monsefi\inst{2}\textsuperscript{\dag} \and \\
Rajiv Ramnath\inst{2} \and 
Ser-Nam Lim\inst{3}}

\authorrunning{A.~Meyarian et al.}


\institute{Path Robotics, Columbus, OH, USA \and
The Ohio State University, Columbus, OH, USA \and
University of Central Florida, Orlando, FL, USA}

\maketitle
\let\oldthefootnote\thefootnote
\let\thefootnote\relax
\footnotetext{\textsuperscript{\dag} Equal contribution.}
\let\thefootnote\oldthefootnote
\setcounter{footnote}{0}

\begin{abstract}

Flow-matching video generators produce temporally coherent, high-fidelity outputs yet routinely violate elementary physics because their reconstruction objectives penalize per-frame deviations without distinguishing physically consistent dynamics from impossible ones. Contrastive flow matching offers a principled remedy by pushing apart velocity-field trajectories of differing conditions, but we identify a fundamental obstacle in the text-conditioned video setting: \emph{semantic-physics entanglement}. Because natural-language prompts couple scene content with physical behavior, naive negative sampling draws conditions whose velocity fields largely overlap with the positive sample's, causing the contrastive gradient to directly oppose the flow-matching objective. We formalize this gradient conflict, deriving a precise alignment condition that reveals when contrastive learning helps versus harms training. Guided by this analysis, we introduce \textbf{\ourmethod} (\textbf{Di}sentangled \textbf{Re}gularization of \textbf{C}ontrastive \textbf{T}rajectories), a lightweight post-training framework that decomposes the contrastive signal into two complementary scales: a \textit{mac-} ro-\textit{contrastive} term that draws partition-exclusive negatives from semantically distant regions for interference-free global trajectory separation, and a \emph{micro-contrastive} term that constructs hard negatives sharing full scene semantics with the positive sample but differing along a single, LLM-perturbed axis of physical behavior; spanning kinematics, forces, materials, interactions, and magnitudes. A velocity-space distributional regularizer helps to prevent catastrophic forgetting of pretrained visual quality. When applied to Wan 2.1-1.3B, our method improves the physical commonsense score on VideoPhy by 16.7\% and 11.3\% compared to the baseline and SFT, respectively, without increasing training time. Additionally, our method achieves the highest total score on WorldModelBench (5.68) among all compared models, surpassing CogVideoX-5B (5.33) while having just $3.8\times$ our parameter count.

\end{abstract}

\section{Introduction}
\label{sec:introduction}

Flow-matching architectures~\cite{zheng2024open,yang2024cogvideox,wan2025wan} have emerged as the dominant framework for video generation, most commonly conditioned on natural-language prompts, producing temporally coherent, high-resolution videos that are increasingly difficult to distinguish from real footage on semantic quality alone.
Yet a conspicuous gap remains between \emph{looking real} and \emph{behaving real}: bouncing balls accelerate after impact, and colliding objects inter-penetrate as if neither were solid.
These failures are systematic, with benchmarks reporting that a majority of generated videos depicting non-trivial physical interactions contain at least one clearly implausible event~\cite{physbench,t2vbench}, and consequential: when such generators are used as predictive world models~\cite{unisim,worldmodel_survey}, even a single physics violation can cascade into incorrect downstream decisions.

\begin{figure}[t]
    \centering
    \includegraphics[width=\textwidth]{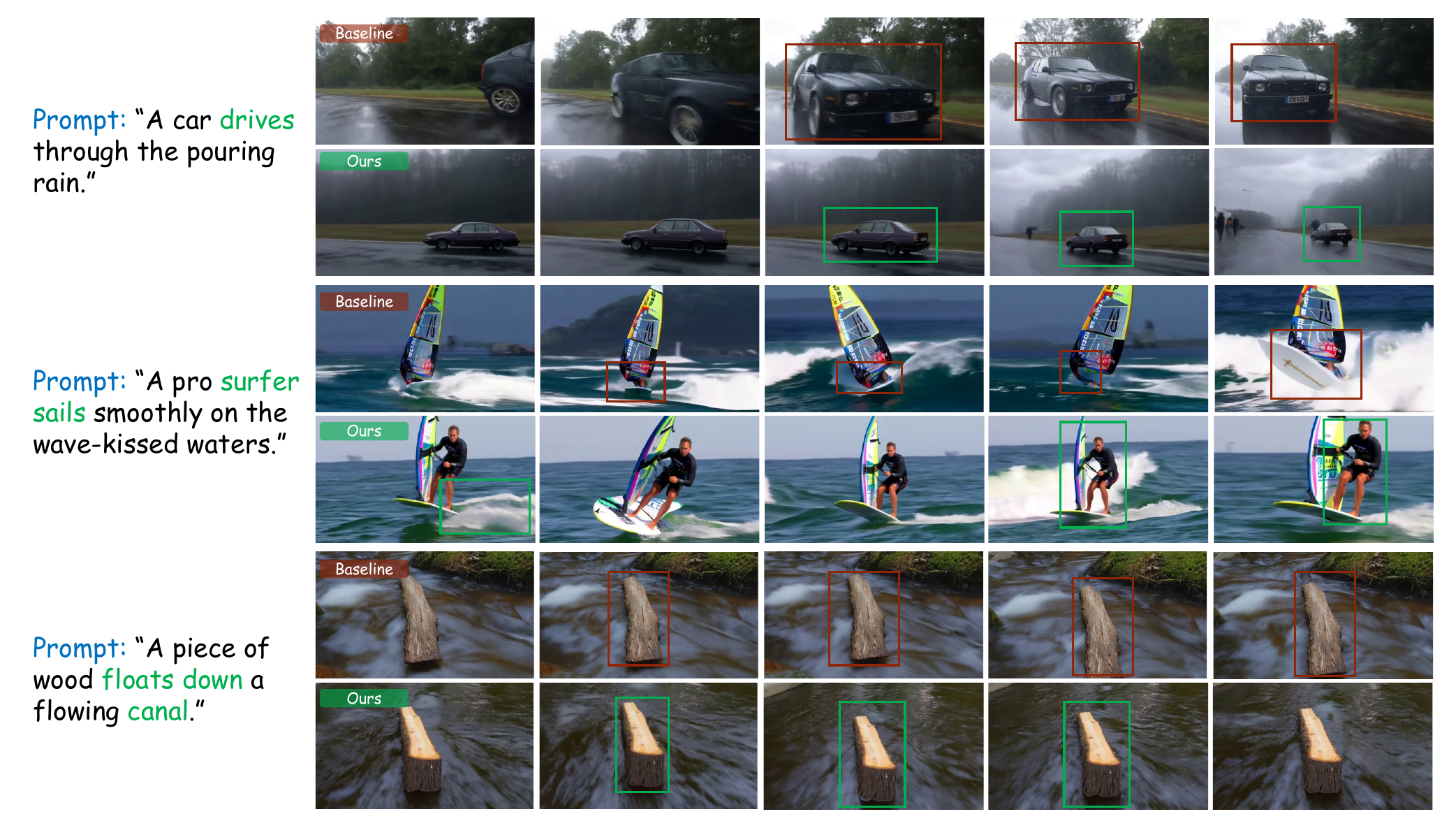}
    \caption{Comparison of zeroshot Wan-2.1-1.3B against the same model trained using \ourmethod on a few prompts from VideoPhy. In the top example, the baseline drives the car backward, violating forward kinematics, while \ourmethod produces a consistent forward trajectory. In the middle example, the baseline's surfer progressively merges with the sail rig, losing bodily structure, whereas \ourmethod maintains the surfer's integrity and mass throughout. In the bottom example, the baseline keeps the wood stationary and upright despite the flowing current, while \ourmethod generates realistic downstream motion consistent with buoyancy and flow dynamics.}
    \label{fig:oursvsbaseline}
\end{figure}

A major contributor is the training objective itself. Reconstruction-based losses, pixel-level $\ell_2$, latent distances, and conditional flow-matching objectives ~\cite{lipman2022flow}, penalize per-frame deviations without distinguishing physically consistent dynamics from impossible ones.
Prior work addresses this gap either by augmenting training with synthetic physics rollouts~\cite{liu2024physgen,lv2024gpt4motion,yang2025vlipp}—for example, \cite{lv2024gpt4motion} is confined to the three material types (rigid bodies, cloth, and liquids) that Blender's built-in physics engine can simulate—requiring domain-specific engineering as well as introducing distribution shift, or by injecting explicit physical priors into the loss or architecture~\cite{shao2025finephys,wang2025wisa}, which improves targeted phenomena but struggles to generalize across diverse dynamics.

An analysis of the velocity-field landscape reveals a deeper structural cause. As conditioning prompts become more similar, either along the \emph{semantic} axis (what is depicted) or the \emph{physical} axis (how it moves), the corresponding flow trajectories converge, and the learned velocity field collapses toward the mean of nearby trajectories~\cite{deltafm,guo2025variational,zhang2025towards}.
This produces outputs that default to physically generic behaviors.
A natural remedy is to add a contrastive signal to the flow-matching objective, pushing the current trajectory away from those whose conditions differ in critical ways~\cite{deltafm}.

However, contrastive flow matching was developed for settings where conditions are \emph{discrete} and \emph{well-separated},  e.g., class labels in image generation, where it yields substantial quality gains~\cite{deltafm}.
In text-conditioned video generation, conditions are \emph{continuous} and \emph{entangled}: a single caption couples semantic content (what the scene contains) with physical dynamics (how objects move), making the two axes covary and impossible to isolate from the caption alone.
A naive contrastive objective, one that simply pushes the velocity field of a training prompt (the \emph{positive}) away from that of a randomly drawn alternative (the \emph{negative}), therefore conflates two goals: separating flows for different visual concepts (already handled adequately by reconstruction) and distinguishing flows governed by different physics (the actual objective).
We show in Section~\ref{subsec:hardvsrand} that when positive and negative conditions share substantial velocity-field structure, as inevitably happens under semantic proximity, the contrastive gradient \emph{opposes} the flow-matching gradient, degrading the overall performance.

We introduce \textbf{\ourmethod} (\textbf{Di}sentangled \textbf{Re}gularization of \textbf{C}ontrastive \textbf{T}- rajectories), a lightweight post-training framework. \ourmethod resolves this conflict via \emph{entanglement-aware multi-scale contrastive learning} in velocity space. Our method decomposes the contrastive objective into two complementary scales. 

A \emph{macro-contrastive} term draws semantically distant negatives via partition-exclusive sampling, providing a clean global separation signal free from gradient interference.
A \emph{micro-contrastive} term targets fine-grained physics distinctions by constructing hard negatives that share scene semantics with the positive but differ along a single, controlled axis of physical behavior, generated via minimal, LLM-guided prompt perturbations across five physics. Additionally, a velocity-space KL penalty prevents catastrophic forgetting of the pretrained model's visual quality.
%
%
In summary, our contributions are:
\begin{itemize}
  \item \textbf{Formalizing gradient conflict in text-conditioned contrastive flows.}
  
  We identify and formalize a failure mode of contrastive flow matching under rich text conditioning: semantic--physics entanglement causes the contrastive gradient to oppose the flow-matching gradient when positive and negative share velocity-field structure, establishing a precise condition under which naive negative sampling degrades training.

  \item \textbf{\ourmethod: entanglement-aware multi-scale contrastive post-training.}
  We introduce \ourmethod, a lightweight post-training framework that incurs no additional training cost over standard SFT, which decomposes the contrastive objective into a macro term (partition-exclusive random negatives for global trajectory separation) and a micro term (physics-perturbed hard negatives for fine-grained physical distinction), with each scale addressing a complementary regime of the gradient-conflict spectrum.

  \item \textbf{Comprehensive evaluation across physics regimes.}
  We validate \ourmethod on physics-oriented benchmarks spanning diverse interaction categories (e.g., collisions, gravity, deformation, fluids), demonstrating improvements in physical plausibility without sacrificing visual quality, and provide ablations isolating the impact of entanglement-aware sampling, hard-negative generation, and each contrastive scale.
\end{itemize}

\section{Related Work}
\label{sec:related_work}

Flow matching~\cite{lipman2022flow,liu2023rectifiedflow}, combined with Transformer backbones~\cite{peebles2023dit,esser2024sd3,ma2024sit}, has demonstrated strong results across diverse generative tasks~\cite{ICLR2025_a44a70ac,nie2025large,monsefi2026fsdfmfastaccuratelong} and currently underpins most open video generators~\cite{yang2024cogvideox,wan2025wan,zheng2024open}.
Recent advances have further improved efficiency via pyramidal denoising~\cite{jin2024pyramidal} and representation alignment~\cite{yu2024representation}.
Despite rapid gains in visual quality, the velocity-regression objective treats all token-level errors equally; it does not distinguish trajectories that comply with physical laws from those that violate them.

A growing line of work targets this gap. Simulation-guided methods~\cite{liu2024physgen,lv2024gpt4motion} inject explicit physics priors but are restricted to the phenomena their simulator covers and can introduce distribution shift. Independent evaluations~\cite{kang2024far} further suggest that scaling data and parameters alone yield case-based rather than universal physical reasoning, motivating physics-specific post-training. Inference-time approaches such as PhyT2V~\cite{xue2025phyt2v} refine prompts via LLM chain-of-thought, improving plausibility at the cost of substantial generation overhead. On the training side, WISA~\cite{wang2025wisa} fine-tunes Mixture-of-Physical-Experts Attention on a curated 80K-video corpus with fine-grained physics prompts; PISA~\cite{li2025pisa} demonstrates the value of even minimal simulation for supervision but reveals poor out-of-distribution transfer beyond its free-fall training domain; and PhysMaster~\cite{ji2025physmaster} achieves strong per-phenomenon results via multi-stage DPO at the cost of dedicated human annotations for each physical category. A recurring pattern connects these efforts: each couples improvement to a domain-specific resource—a simulator, a categorized reward model, or per-phenomenon labels—whose coverage bounds the resulting generalization. This motivates a contrastive objective that derives its physics signal directly from the velocity field, requiring only coarse category-level labels rather than per-phenomenon infrastructure (Section~\ref{sec:multi_scale}).

Contrastive objectives have improved several generative pipelines~\cite{park2020contrastive,kang2020contragan,zhu2022discrete,meral2024conform,dalva2024noiseclr}. Most directly relevant, $\Delta$FM~\cite{deltafm} augments the flow-matching loss with a contrastive regularizer that pushes apart predicted velocities for different conditions. A revealing asymmetry appears in their results: the class-conditional ImageNet variant, where labels are discrete and well-separated, yields significant visual improvements, whereas the text-conditioned CC3M variant produces notably lower quality gains. While $\Delta$FM demonstrates the benefit of contrastive regularization, they do not analyze the failure mode that arises when conditions share substantial velocity-field structure. We hypothesize that this asymmetry stems from semantic entanglement: rich captions couple visual content with physical and stylistic attributes, so a contrastive loss applied without regard to this coupling conflates all axes of variation. We formalize this hypothesis for the video setting in Appendix~\ref{append:gradient_conflict}, deriving a precise condition under which the contrastive gradient opposes the flow-matching objective, and verify it empirically via gradient alignment measurements in Appendix~\ref{append:gradient_alignment}. This analysis motivates negative-sampling strategies that respect the entangled structure of the text-conditioning space.

Separately, a rich literature disentangles content from motion at the architectural level: factored latent spaces~\cite{denton2017unsupervised,tulyakov2018mocogan}, decomposed noise~\cite{luo2023videofusion}, separate diffusion processes~\cite{yu2024efficient}, modular temporal modules~\cite{guo2023animatediff}, and dual-path LoRAs~\cite{zhao2024motiondirector}. These approaches effectively decompose what moves from how it moves within the model's internal representations. However, architectural decomposition does not address entanglement within the text conditioning itself: two prompts describing the same scene with different physical outcomes (e.g., a ball bouncing elastically vs. splattering on impact) produce nearly identical text embeddings yet demand distinct velocity fields. This conditioning-level entanglement requires a corresponding intervention in how conditions are contrasted during training — the focus of our approach.

RLHF-based approaches~\cite{xu2023imagereward,li2024t2v,prabhudesai2024video,sorokin2025imagerefl} and Direct Preference Optimization (DPO) variants for images~\cite{wallace2024diffusion,zhu2025dspo} and videos~\cite{liu2025videodpo,liu2025improving,jiang2025huvidpo} have become the dominant paradigm for aligning generative models with human preferences.
However, both paradigms compress the rich spatiotemporal structure of generation errors into coarse preferred-versus-rejected judgments, making it difficult for the model to localize \emph{where} and \emph{why} a trajectory deviates from physical plausibility.
\ourmethod provides a structurally richer training signal: contrastive velocity differences encode spatiotemporal information about \emph{how} the predicted trajectory must change to better respect physical dynamics, operating at the level of the velocity field rather than a scalar preference.

\section{Preliminaries and Backgrounds}
\label{sec:preliminaries}
We develop our proposed framework based on flow matching models~\cite{zheng2024open,yang2024cogvideox,wan2025wan} given their proven superior capabilities in video generation\cite{wan2025wan}. More specifically, to force separation of learned flow trajectories, we follow the $\Delta$FM method introduced in \cite{deltafm}.  

Stoica~\etal~\cite{deltafm} observe that while unconditional flow matching guarantees unique flows between sample pairs, this uniqueness breaks down in conditional settings, flows from different conditions may overlap, producing ambiguous, ``averaged'' generations. To address this, they propose Contrastive Flow Matching ($\Delta$FM), which augments the standard conditional flow matching objective with a contrastive regularization term that explicitly enforces flow uniqueness across conditions.

Conditional flow matching trains a model $v_\theta(x_t, t, y)$ to regress the target velocity $\mathbf{u}^{+} = \dot{\alpha}_t \hat{x} + \dot{\sigma}_t \epsilon$ for a sample $\hat{x} \sim p(x|y)$, where $y$ is the condition, and the noise $\epsilon \sim \mathcal{N}(0, \mathbf{I})$. $\Delta$FM introduces a negative sample $\tilde{x} \sim p(x|\tilde{y})$, where $\tilde{y}$ may or may not equal $y$, paired with an independent noise $\tilde{\epsilon} \neq \epsilon$. The contrastive regularization then maximizes the distance between the predicted velocity $v_{\theta}(x_t, t, y)$ and the independent target velocity $\mathbf{u}^{-} = \dot{\alpha}_t \tilde{x} + \dot{\sigma}_t \tilde{\epsilon}$. The combined objective is:

\begin{equation}
    \mathcal{L}^{(\Delta\text{FM})}(\theta) = \mathbb{E}\Big[\|v_\theta(x_t, t, y) - \mathbf{u}^{+}\|^2 - \lambda \|v_\theta(x_t, t, y) - \mathbf{u}^{-}\|^2\Big],
    \label{eq:delta_fm}
\end{equation}
where $\lambda \in [0,1)$ controls the strength of the contrastive term. The first term is the standard flow matching loss, encouraging the predicted velocity to match the target flow. The objective reduces to standard flow matching when $\lambda = 0$.

\section{Entanglement-Aware Contrastive Flow Matching}\label{sec:multi_scale}

\begin{figure*}[t]
    \centering
    \includegraphics[width=\textwidth]{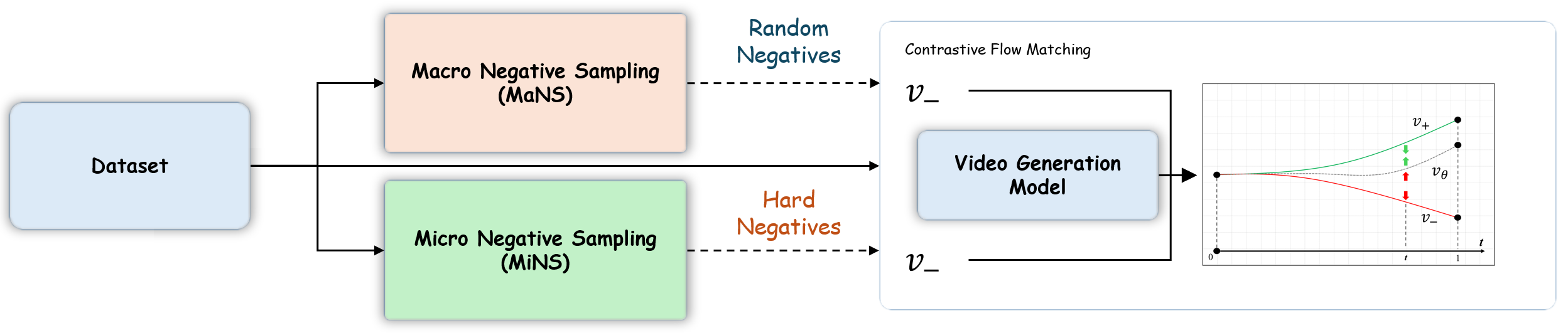}
    \caption{\textbf{Overview of the \ourmethod framework.} The contrastive objective is decomposed into two complementary scales: a \emph{macro-contrastive} term draws random negatives from semantically distant clusters (MaNS), providing clean global trajectory separation, while a \emph{micro-contrastive} term uses physics-perturbed hard negatives (MiNS) that share scene semantics but violate a targeted physical dimension, enabling fine-grained physics discrimination.}
    \label{fig:main_method}
\end{figure*}

Contrastive flow matching~\cite{deltafm} draws its negative samples uniformly at random from within the training batch, without regard to the semantic relationship between the positive condition $y$ and the sampled negative condition $\tilde{y}$.
This design is well-suited to class-conditional generation, where labels are discrete and non-overlapping.
In text-conditioned video generation, however, conditions lie in a continuous embedding space where semantically related prompts---and their associated velocity fields---share substantial structure.
We formalize this failure mode in Appendix~\ref{append:gradient_conflict} and present a two-scale solution in the remainder of this section.

\subsection{Macro-Contrastive Learning (MaNS)}\label{sec:macro}

\begin{figure*}[t]
    \centering
    \includegraphics[width=\textwidth]{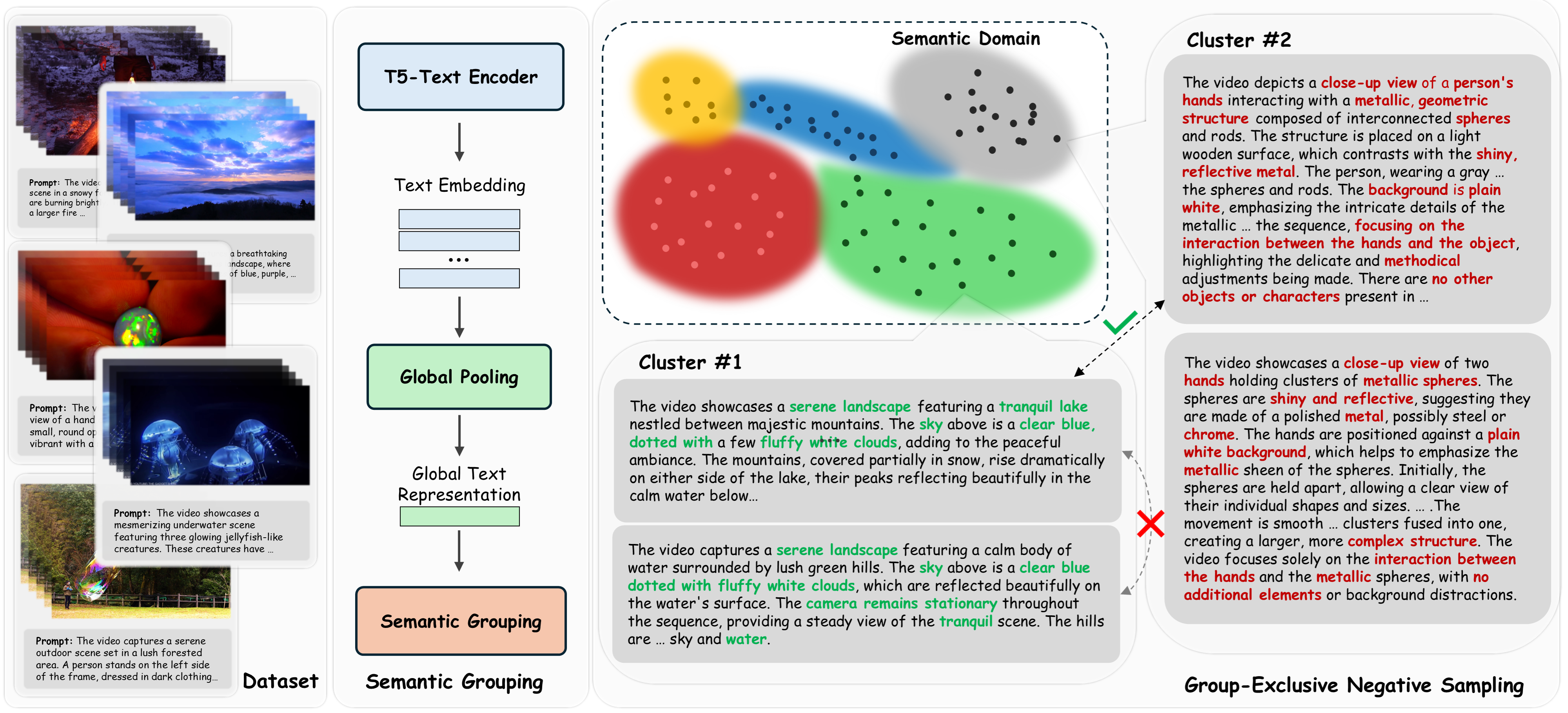}
    \caption{\textbf{Macro-contrastive negative sampling (MaNS).} Prompts are encoded via the text encoder of the video generative model, globally pooled, and partitioned into $K$ semantic regions. For each positive, negatives are drawn exclusively from different partitions, ensuring minimal velocity-field overlap and preventing the gradient conflict identified in Proposition~\ref{prop:conflict} from Appendix~\ref{append:gradient_conflict}.}
    \label{fig:macro}
\end{figure*}


Let $\boldsymbol{\delta} = \mathbf{u}^{+} - \mathbf{u}^{-}$ denote the velocity gap between the positive and negative targets. When $\|\boldsymbol{\delta}\|$ is large---i.e., the positive and negative conditions produce well-separated velocity fields---the contrastive and flow-matching gradients are naturally aligned (Proposition~\ref{prop:conflict}, Appendix~\ref{append:gradient_conflict}). To ensure all random negatives fall in this regime, we partition the condition space into semantically coherent regions and restrict sampling to cross-region pairs. An overview of this mechanism is depicted in Figure~\ref{fig:macro}.

Let $\mathcal{E}: \mathcal{Y} \rightarrow \mathbb{R}^d$ denote the frozen text encoder of the generative model.
Since prompts vary in token length, we obtain a fixed-dimensional representation for each prompt $y$ by globally pooling its token-level features: $\mathbf{z}_y = \mathrm{Pool}(\mathcal{E}(y)) \in \mathbb{R}^d$.
We partition the resulting embedding manifold into $K$ semantic regions on $\{\mathbf{z}_y\}_{y \in \mathcal{D}}$, yielding a partition $\{\mathcal{C}_1, \ldots, \mathcal{C}_K\}$.
Each partition captures a neighborhood of semantically coherent conditions whose velocity fields share substantial structure; the partition boundaries delineate the scale at which semantic overlap begins to cause gradient conflict.

For an anchor condition $y \in \mathcal{C}_i$, the macro-contrastive sampling rule restricts negatives to a different cluster:
\begin{equation}\label{eq:macro_sampling}
    \tilde{y}_{\mathrm{rand}} \sim \mathcal{U}~\!(\bigcup_{j \neq i} \mathcal{C}_j).
\end{equation}
By construction, such negatives share minimal velocity-field structure with the positive ($\|\boldsymbol{\delta}\|$ is large), and the resulting contrastive gradients satisfy the alignment condition (Equation~\ref{eq:alignment_condition} from Appendix~\ref{append:gradient_conflict}).
The macro-contrastive loss is:
\begin{equation}\label{eq:loss_rand}
    \mathcal{L}_{\mathrm{rand}} = \|v_\theta(x_t, t, y) - (\dot{\alpha}_t \tilde{x}_{\mathrm{rand}} + \dot{\sigma}_t \tilde{\epsilon}_{\mathrm{rand}})\|^2,
\end{equation}
where $\tilde{x}_{\mathrm{rand}}$ is a sample drawn from the partition-excluded pool with prompt $\tilde{y}_{\mathrm{rand}}$, and $\tilde{\epsilon}_{\mathrm{rand}}$ is an independently drawn noise vector.
This term teaches the model to broadly reject irrelevant trajectories, thereby establishing global structure in the learned velocity field with minimal interference from semantically entangled conditions.

\subsection{Micro-Contrastive Learning (MiNS)}\label{sec:micro}

\begin{figure*}[t]
    \centering
    \includegraphics[width=\textwidth]{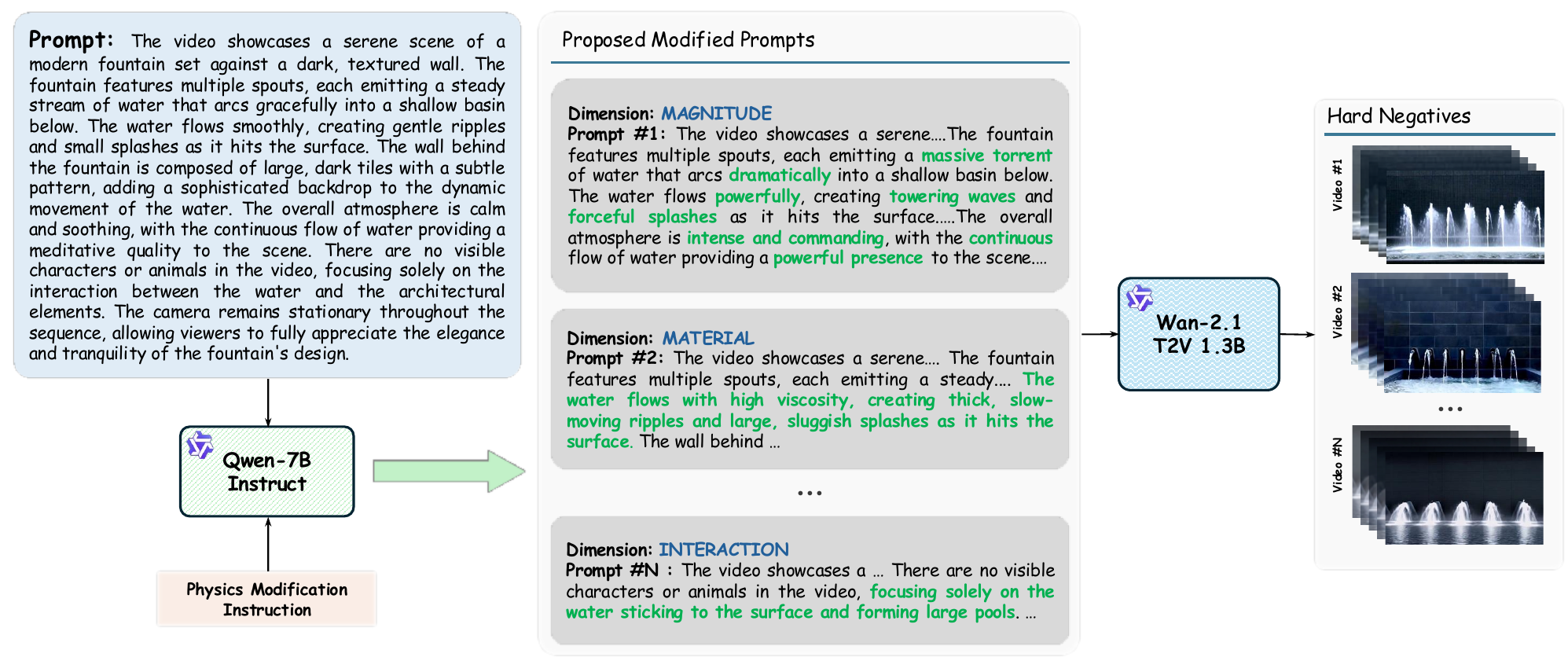}
    \caption{Micro-contrastive hard negative generation (MiNS). For each anchor prompt, an LLM (Qwen2.5-7B-Instruct) perturbs a single physics dimension while preserving scene semantics. The perturbed prompt is rendered by the base model to produce a hard negative video whose velocity trajectory is physically inconsistent with the anchor.}
    \label{fig:micro}
\end{figure*}

The macro-contrastive term establishes global trajectory separation but cannot resolve fine-grained physical distinctions between semantically similar conditions ---precisely the regime where $\|\boldsymbol{\delta}\|$ is small and naive contrastive sampling fails.
The micro-contrastive term addresses this by constructing hard negatives where the velocity gap $\boldsymbol{\delta}$, though small in norm, is \emph{concentrated on the physics-relevant subspace}: anchor and negative share scene semantics but differ along a single, controlled axis of physical behavior.

\paragraph{Physics-axis perturbation.}
We define five perturbation dimensions grounded in classical mechanics:
\begin{itemize}
  \item \textbf{Kinematics}: motion time-profile (e.g., sudden vs.\ gradual),
  \item \textbf{Forces}: dominant force direction or type (e.g., falling vs.\ rising),
  \item \textbf{Material}: a single substance property (e.g., viscosity, elasticity, friction),
  \item \textbf{Interaction}: contact response (e.g., bounce vs.\ stick, shatter vs.\ dent),
  \item \textbf{Magnitude}: scalar intensity of a physical quantity (e.g., droplet vs.\ stream).
\end{itemize}
These five axes are chosen to span two complementary aspects of physical plausibility: how an object \emph{appears} (Material, Magnitude) and how it \emph{interacts with its environment} (Kinematics, Forces, Interaction). Together, they cover the full arc from object properties through dynamic response, ensuring that single-axis perturbations can target either intrinsic physical attributes or emergent behaviors upon contact. 

For each anchor prompt $y$, we randomly sample one dimension and use an LLM (Qwen2.5-7B-Instruct)\footnote{Qwen2.5-7B-Instruct was chosen as it is both efficient for offline generation of perturbed prompts and sufficiently accurate for structured single-axis edits governed by a fixed template.} to produce a corrupted variant $\tilde{y}_{\mathrm{hard}}$, modifying only the minimal text necessary to introduce the targeted physical violation while preserving the scene, objects, setting, and camera style (the full prompt template and per-dimension definitions are provided in Appendix~\ref{append:perturbation_examples}).
This process is repeated $N$ times per prompt, with random-dimension subsampling across the dataset, ensuring that all five dimensions are well represented in aggregate while introducing combinatorial diversity.

MiNS requires that $\tilde{y}_{\mathrm{hard}}$ preserves scene semantics while modifying only the targeted physics axis. We verify this by measuring cosine similarity between anchor and perturbed embeddings in the frozen text-encoder space. Pairs falling below a certain similarity threshold are discarded during dataset construction. Representative positive prompt and perturbation pairs, as well as details of the selection process, are provided in Appendix~\ref{append:perturbation_examples}. We then generate a video for each perturbed prompt using the base model.

\paragraph{Dual benefit of generated negatives.}
Because the hard negatives are synthesized by the base model itself, they exhibit both the targeted physical violation and the characteristic artifacts of the generator's current distribution (e.g., temporal flickering, texture drift).
The micro-contrastive term, therefore, trains the model to push its velocity field away from both physically implausible dynamics and its own generation artifacts simultaneously.
This dual signal is a structural advantage over using real video negatives, which could only provide the physics-violation contrast without addressing generation-specific failure modes.

It is worth noting that the contrastive term acts on the velocity field induced by the conditioning $\tilde{y}_{\mathrm{hard}}$, not on the visual realism of the rendered negative. Even if the base model does not faithfully depict the perturbed physics, the conditioning shift $y \to \tilde{y}_{\mathrm{hard}}$ produces a distinct target velocity in latent space, which is sufficient for the contrastive objective. 

The micro-contrastive loss mirrors the macro term:

\begin{equation}\label{eq:loss_hard}
    \mathcal{L}_{\mathrm{hard}} = \|v_\theta(x_t, t, y) - (\dot{\alpha}_t \tilde{x}_{\mathrm{hard}} + \dot{\sigma}_t \tilde{\epsilon}_{\mathrm{hard}})\|^2,
\end{equation}
where $\tilde{x}_{\mathrm{hard}}$ is the video generated from the physics-perturbed prompt $\tilde{y}_{\mathrm{hard}}$, and $\tilde{\epsilon}_{\mathrm{hard}}$ is an independently drawn noise vector.
Because the perturbation is restricted to a single physics dimension, $\boldsymbol{\delta}$ for this pair is concentrated in the physics-relevant subspace of the velocity field, ensuring the separation term in Equation~\ref{eq:alignment}(from Appendix~\ref{append:gradient_conflict}) remains meaningful despite the high semantic similarity between $y$ and $\tilde{y}_{\mathrm{hard}}$. More details on this matter can be found in Remark~\ref{rem:micro_alignment}.

\subsection{Training Objective}\label{sec:objective}

\paragraph{Distributional anchoring.}
Contrastive post-training risks catastrophic forgetting: the model may achieve strong trajectory separation by drifting away from the pretrained distribution, degrading visual quality, and temporal coherence.
To prevent this, we maintain a frozen copy of the pretrained model $\theta_{\mathrm{ref}}$ and penalize velocity-space divergence from it:
\begin{equation}\label{eq:kl_anchor}
    \mathcal{L}_{\mathrm{anchor}} = \| v_\theta(x_t, t, y) - v_{\theta_{\mathrm{ref}}}(x_t, t, y) \|^2.
\end{equation}
This term acts as a velocity-space analogue of the KL penalty in RLHF~\cite{xu2023imagereward}: it upper-bounds the divergence between the path-measure distributions induced by $\theta$ and $\theta_{\mathrm{ref}}$, ensuring the fine-tuned model remains close to the pretrained model's generation manifold while allowing targeted physics improvements.

\paragraph{Combined objective.}
The full \ourmethod loss integrates reconstruction, both contrastive scales, and distributional anchoring:

\begin{equation}\label{eq:\ourmethod_loss}
\begin{split}
    \mathcal{L}_{\mathrm{\ourmethod}} = \mathbb{E}\Big[
      &\underbrace{\|v_\theta(x_t, t, y) - (\dot{\alpha}_t \hat{x} 
      + \dot{\sigma}_t \epsilon)\|^2}_{\text{flow matching}} \\
      &- \lambda_{\mathrm{rand}}\,\mathcal{L}_{\mathrm{rand}}
      - \lambda_{\mathrm{hard}}\,\mathcal{L}_{\mathrm{hard}}
      + \lambda_{\mathrm{anc}}\,\mathcal{L}_{\mathrm{anchor}}
    \Big],
\end{split}
\end{equation}
where $\lambda_{\mathrm{rand}}$, $\lambda_{\mathrm{hard}} \in [0, 1)$ control the strength of the MaNS and MiNS, respectively, and $\lambda_{\mathrm{anc}} > 0$ controls the distributional anchoring strength.
The objective reduces to standard flow matching when $\lambda_{\mathrm{rand}} = \lambda_{\mathrm{hard}} = \lambda_{\mathrm{anc}} = 0$.
The interplay between the three terms is as follows: the flow-matching term provides the reconstruction signal; the macro-contrastive term sharpens global trajectory separation in cooperative gradient regions; the micro-contrastive term refines physics-specific distinctions in the fine-grained regime; and the anchoring term regularizes the entire process against distributional drift.
A sensitivity analysis of $\lambda_{\mathrm{rand}}$, $\lambda_{\mathrm{hard}}$, and $\lambda_{\mathrm{anc}}$ is provided in Appendix~\ref{append:lambda_sensitivity}.

\section{Experiments} \label{sec:experiments} 
We evaluate \ourmethod from different aspects; additional ablations and analyses are provided in the appendix.
\subsection{Implementation Details}
\label{sec:impl}
\paragraph{Base model and generation backbone.}
We adopt Wan-2.1-T2V-1.3B~\cite{wan2025wan} as both the pretrained backbone for \ourmethod fine-tuning and the generator used to synthesize hard negative videos. All generated videos have a spatial resolution of $480 \times 832$ and a temporal extent of 81~frames. 

\paragraph{Training Configuration.}
We train with the AdamW optimizer using a constant learning rate of $1\times10^{-6}$, $\beta_1 = 0.9$, $\beta_2 = 0.999$. Training proceeds for 15k steps with a global batch size of 8 (4 per GPU) using DeepSpeed ZeRO Stage~2 for distributed training. The frozen reference model $\theta_{\mathrm{ref}}$ is maintained in memory alongside the trainable copy to compute the distributional anchoring term (Equation~\ref{eq:kl_anchor}) at each step. We use $\lambda_{anc}=0.2$, $\lambda_{hard}=0.02$, and $\lambda_{rand}=0.005$. We cluster the training prompts into $K = 32$ semantic groups via $k$-means on frozen text-encoder embeddings. For additional information regarding the hyperparameters, refer to Appendix~\ref{append:lambda_sensitivity} and ~\ref{append:clustering}.

\begin{table*}[t]
    \centering
    \caption{Ablation on negative sampling strategy. All fine-tuned variants use the same base model and training data. VideoPhy measures the physical plausibility of generated videos. WorldModelBench evaluates physical commonsense and physics adherence across diverse scenarios. Best results are shown in \textbf{bold}, second best \underline{underlined}.}
    \label{tab:negative_ablation}
    \renewcommand{\arraystretch}{1.3}
    \resizebox{\textwidth}{!}{%
    \begin{tabular}{lccccccccc}
        \toprule
        \multirow{3}{*}{\textbf{Method}} & \multicolumn{3}{c}{\textbf{VideoPhy(\%)} $\uparrow$} & \multicolumn{6}{c}{\textbf{WorldModelBench} $\uparrow$} \\
        \cmidrule(lr){2-4} \cmidrule(lr){5-10}
         & \multirow{2}{*}{SA} & \multirow{2}{*}{PC} & \multirow{2}{*}{AVG} & \multirow{2}{*}{Instr.} & \multicolumn{2}{c}{Common Sense} & \multicolumn{2}{c}{Physics Adherence} & \multirow{2}{*}{Total} \\
        \cmidrule(lr){6-7} \cmidrule(lr){8-9}
         & & & & & Frame & Temp. & Mass & Penetr. & \\
        \midrule
        \multicolumn{10}{l}{\textit{Baselines}} \\[1pt]
        Zero-shot                  & 50.45 & 32.69 & 41.57 & \textbf{2.18} & 0.93 & 0.80 & 0.72 & 0.83 & 5.46 \\
        + SFT                      & 50.38 & 34.28 & 42.33 & 2.14 & 0.92 & 0.81 & 0.75 & 0.85 & 5.47 \\
        + Random negatives ($\Delta$FM) & 47.71 & 35.89 & 41.80 & 2.12 & 0.91 & 0.80 & 0.73 & 0.84 & 5.40 \\
        \midrule
        \multicolumn{10}{l}{\textit{\ourmethod components}} \\[1pt]
        + MaNS                     & 50.42 & 36.03 & 43.23 & 2.15 & 0.93 & 0.82 & 0.77 & 0.88 & 5.54 \\
        + MiNS                     & 50.10 & 36.17 & 43.14 & 2.14 & 0.93 & 0.83 & 0.74 & 0.85 & 5.50 \\
        + MaNS + MiNS              & \underline{50.78} & \underline{37.46} & \underline{44.12} & 2.15 & \underline{0.93} & \underline{0.85} & \underline{0.78} & \underline{0.89} & \underline{5.60} \\
        \rowcolor[HTML]{ECF4FF} \textbf{+ SFT + MaNS + MiNS (\ourmethod)} & \textbf{51.26} & \textbf{38.16} & \textbf{44.71} & \underline{2.17} & \textbf{0.94} & \textbf{0.88} & \textbf{0.79} & \textbf{0.90} & \textbf{5.68} \\
        \bottomrule
    \end{tabular}%
    }
\end{table*}

\subsection{Datasets}
\label{sec:dataset}
We train \ourmethod on a curated subset of the WISA-80K dataset~\cite{wang2025wisa}, which provides videos annotated across 17 physical laws spanning dynamics, thermodynamics, and optics. Each video is paired with a scene-level caption and a physics-focused description; we use only the scene-level caption as the text condition, ensuring the model infers physical plausibility from visual dynamics rather than explicit physics cues. The coarse physics law category for each sample is also used in our proposed MaNS. From the full dataset, we select 11,299 samples with a minimum of 81 frames and a 16:9 aspect ratio. Longer videos are uniformly subsampled to exactly 81 frames.

\begin{table*}[t]
    \centering
    \caption{Comparison with state-of-the-art open-source video generation models of comparable scale. We report VideoPhy (SA, PC, AVG), WorldModelBench scores across instruction following, common sense, and physics adherence. Best results in \textbf{bold}, second best \underline{underlined}.}
    \label{tab:sota_comparison}
    \renewcommand{\arraystretch}{1.3}
    \resizebox{\textwidth}{!}{%
    \begin{tabular}{lcccccccccc}
        \toprule
        \multirow{3}{*}{\textbf{Model}} & \multirow{3}{*}{\textbf{Params}} 
        & \multicolumn{3}{c}{\textbf{VideoPhy (\%)} $\uparrow$} 
        & \multicolumn{6}{c}{\textbf{WorldModelBench} $\uparrow$} \\
        \cmidrule(lr){3-5} \cmidrule(lr){6-11}
        & & \multirow{2}{*}{SA} & \multirow{2}{*}{PC} & \multirow{2}{*}{AVG} 
        & \multirow{2}{*}{Instr.}
        & \multicolumn{2}{c}{Common Sense} 
        & \multicolumn{2}{c}{Physics Adherence} 
        & \multirow{2}{*}{Total} \\
        \cmidrule(lr){7-8} \cmidrule(lr){9-10}
        & & & & & & Frame & Temporal & Mass & Penetr. & \\
        \midrule
        CogVideoX-2B~\cite{yang2024cogvideox}           & 2B   & 48.73 & 34.75 & 41.74 & 1.97 & 0.88 & 0.75 & 0.67 & 0.82 & 5.09 \\
        LTX~\cite{hacohen2024ltx}                    & 2B   & 21.10 & 25.09 & 23.10 & 1.57 & 0.74 & \underline{0.81} & \textbf{0.79} & \textbf{0.91} & 4.82 \\
        Allegro~\cite{zhou2024allegro}                & 2.8B & \underline{51.27} & 27.24 & 39.25 & 1.91 & 0.87 & 0.79 & \underline{0.76} & 0.89 & 5.22 \\
        CogVideoX-5B~\cite{yang2024cogvideox}           & 5B   & 47.30 & \textbf{42.79} & \textbf{45.05} & 2.01 & 0.91 & 0.80 & 0.75 & 0.86 & 5.33 \\
        Mochi~\cite{genmo2024mochi}                  & 10B  & \textbf{53.10} & 36.53 & \underline{44.81} & 1.95 & 0.74 & 0.70 & 0.65 & 0.87 & 4.91 \\
        \midrule
        Wan 2.1-T2V (base)~\cite{wan2025wan}     & 1.3B & 50.45 & 32.69 & 41.57 & \textbf{2.18} & \underline{0.93} & 0.80 & 0.72 & 0.83 & \underline{5.46} \\
        \textbf{\ourmethod (ours)} & 1.3B & 51.26 & \underline{38.16} & 44.71 & \underline{2.17} & \textbf{0.94} & \textbf{0.88} & \textbf{0.79} & \underline{0.90} & \textbf{5.68} \\
        \bottomrule
    \end{tabular}%
    }
\end{table*}

\subsection{Benchmarks and Metrics}
\label{sec:benchmarks_metrics}
We evaluate \ourmethod along two complementary axes---physical plausibility and overall visual quality---using established benchmarks. VideoPhy~\cite{bansal2024videophy} assesses Semantic Adherence (SA) and Physical Commonsense (PC) over 344 prompts. The standard protocol binarizes both scores; instead, we report soft values (averaged softmax outputs from the auto-rater) to better capture fine-grained differences between methods. WorldModelBench~\cite{li2025worldmodelbench} evaluates instruction following, common sense, and fine-grained physics adherence (Newton's laws, mass conservation, fluid dynamics, penetration, gravity) across 350 prompts of diverse physical scenarios. For WorldModelBench, we report the performances with respect to mass conservation and penetration as they pose the most critical challenges to models. The total score for WorldModelBench is a summation of all metric values for this benchmark.

\subsection{Hard vs. Random Negatives}\label{subsec:hardvsrand}
Table~\ref{tab:negative_ablation} examines each component under controlled conditions. The use of random negatives negatively impacts performance, as evidenced by a 5.43\% drop in VideoPhy SA and a decrease in WorldModelBench Total from 5.46 to 5.40. This outcome empirically confirms the gradient conflict discussed in Appendix~\ref{append:gradient_conflict}: when negatives significantly overlap with the positive in terms of velocity fields, the contrastive gradient can hinder reconstruction.

\begin{figure*}[t]
    \centering
    \includegraphics[width=\textwidth]{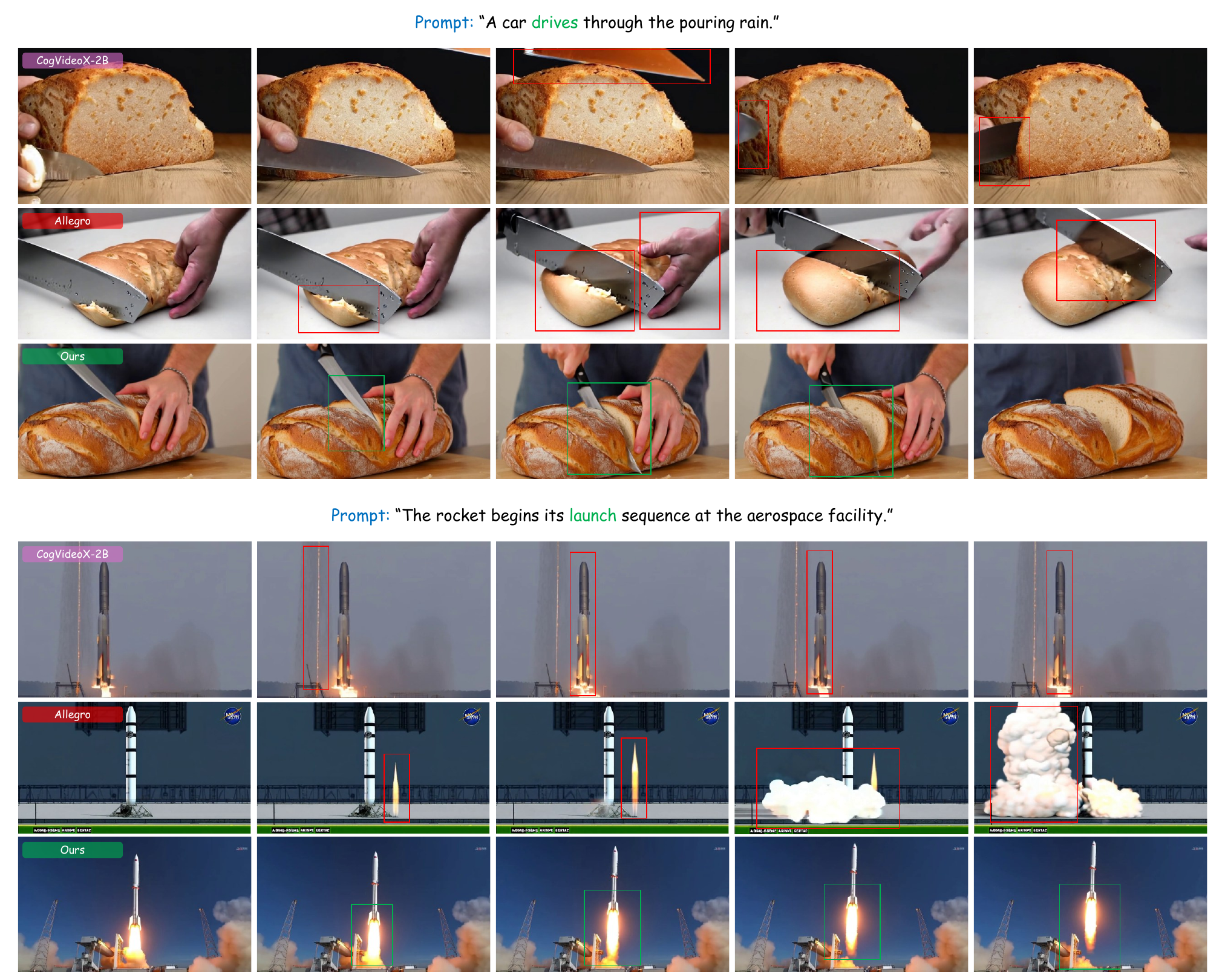}
    \caption{Comparison of \ourmethod with CogVideoX-2B and Allegro on a prompt from VideoPhy (top) and one from WorldModelBench (bottom).}
    \label{fig:sota_comp}
\end{figure*}

MaNS and MiNS tackle this issue at complementary scales. MaNS primarily enhances Physics Adherence, increasing Mass from 0.72 to 0.77 and Penetration from 0.83 to 0.88 by enabling interference-free partition separation. On the other hand, MiNS produces the largest improvements in Temporal Common Sense, raising the score from 0.80 to 0.83 through targeted physics perturbations. When combined, their effects surpass those of each component alone across all metrics, achieving a total score of 5.60 compared to 5.54 and 5.50 for the individual components, thereby confirming that the two signals are complementary.

Incorporating SFT results in the complete \ourmethod, which achieves the best scores on all metrics, with a VideoPhy average of 44.71\% and a Total score of 5.68, while also maintaining instruction-following capability (2.17 vs. 2.18 in zero-shot scenarios). This validates that each component contributes a unique and additive signal.

\subsection{Comparison with State-of-the-Art}

We compare our method with state-of-the-art models across a broad range, including those of the same and larger scales. Results are provided in Table~\ref{tab:sota_comparison}. Despite using only 1.3B parameters, \ourmethod achieves the highest WorldModelBench total score (5.68), surpassing models with significantly larger capacity such as Mochi (10B, 4.91) and CogVideoX-5B (5B, 5.33). This demonstrates that targeted physics-aware fine-tuning can be more effective than simply scaling model size for improving physical realism.
On VideoPhy, our method achieves an average score of 44.71\%, which is competitive with Mochi's score of 44.81\%, despite Mochi requiring 7.7 times more parameters. Additionally, we obtained the second-highest PC score of 38.16\% among all models, only behind CogVideoX-5B, which scored 42.79\% with 3.8 times our parameter count. These results indicate that contrastive flow matching, combined with physics-aware negative sampling, offers an efficient and effective way to incorporate physical understanding into video generation models without the need for an excessive number of parameters. A qualitative comparison of our model against state-of-the-art is presented in Figure~\ref{fig:sota_comp}.

\subsection{Training dynamics}
A key question for any post-training method is whether its learning signal remains valuable as training progresses or if it saturates once the model adapts to the fine-tuning distribution. We compare the convergence behavior of our method and supervised fine-tuning (SFT) on VideoPhy over 15,000 steps in Figure~\ref{fig:training_dynamics}. SFT shows steady improvement but plateaus after 5,000 steps, achieving only an increase of +0.32 in SA and +0.48 in PC in its best results. In contrast, our method continues to improve throughout the entire training process, with an increasing lead over SFT. This indicates the significance of our method in post-training.

\begin{figure*}[t]
    \centering
    \includegraphics[width=\textwidth]{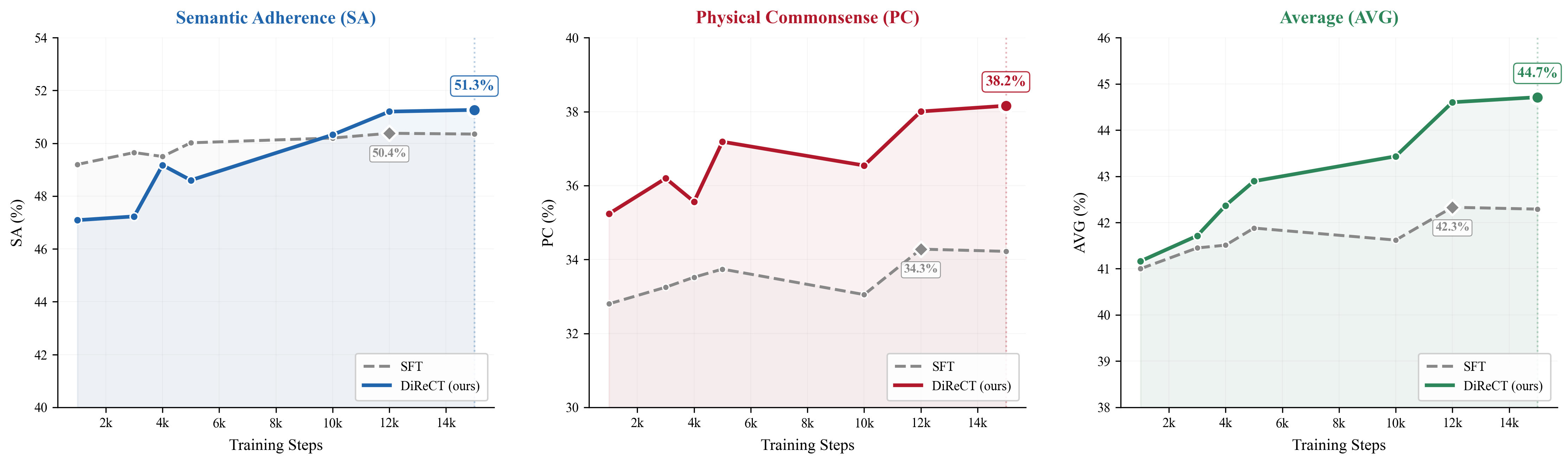}
    \caption{Training dynamics in terms of VideoPhy scores. SFT starts to saturate beyond 5k steps, while \ourmethod continues to improve and stabilizes around 15k steps, with the gap widening particularly on Physical Commonsense (PC).}
    \label{fig:training_dynamics}
\end{figure*}

\section{Conclusion}

We presented \ourmethod, a post-training framework that improves physical plausibility in flow-matching video generators through entanglement-aware contrastive learning. Our core insight is that naive contrastive flow matching fails under text conditioning because semantic--physics entanglement causes the contrastive gradient to oppose reconstruction. We formalized this conflict and proposed a two-scale solution: macro-contrastive sampling draws negatives from semantically distant partitions for interference-free trajectory separation, while micro-contrastive sampling constructs physics-perturbed hard negatives that isolate fine-grained physical distinctions. Applied to Wan 2.1-1.3B, \ourmethod achieves the highest WorldModelBench Total score among all compared models---including those with up to 7.7$\times$ more parameters---while preserving instruction-following capability and visual quality.



%
%
\bibliographystyle{splncs04}
\bibliography{main}

\clearpage
\appendix
\renewcommand{\theHsection}{A\arabic{section}} 
\setcounter{figure}{0}
\setcounter{table}{0}
\renewcommand{\thetable}{S\arabic{table}}  
\renewcommand{\thefigure}{S\arabic{figure}}



\section{Gradient Conflict Under Semantic Proximity} \label{append:gradient_conflict}

Consider the $\Delta$FM objective (Equation.~\ref{eq:delta_fm}).
At each training step, the model $v_\theta$ receives a flow-matching update that pulls its prediction toward the positive target velocity $\mathbf{u}^{+} = \dot{\alpha}_t \hat{x} + \dot{\sigma}_t \epsilon$, and a contrastive update that pushes its prediction away from the negative target velocity $\mathbf{u}^{-} = \dot{\alpha}_t \tilde{x} + \dot{\sigma}_t \tilde{\epsilon}$.
Writing $\mathbf{v} = v_\theta(x_t, t, y)$ for brevity, the effective gradient directions for these two signals are:
\begin{equation}\label{eq:fm_grad}
  \mathbf{g}_{\mathrm{FM}} \propto \mathbf{u}^{+} - \mathbf{v}
  \qquad \text{(flow matching: pull toward } \mathbf{u}^{+}\text{)},
\end{equation}
\begin{equation}\label{eq:contrast_grad}
  \mathbf{g}_{\mathrm{c}} \propto \mathbf{v} - \mathbf{u}^{-}
  \qquad \text{(contrastive: push away from } \mathbf{u}^{-}\text{)}.
\end{equation}
These two updates are compatible when they point in similar directions, and conflict when they oppose each other. Their alignment is captured by the following:

\begin{proposition}[Gradient conflict under semantic proximity]\label{prop:conflict}
Let\/ $\boldsymbol{\delta} = \mathbf{u}^{+} - \mathbf{u}^{-}$ denote the velocity gap between positive and negative targets. The inner product of the flow-matching and contrastive gradient directions satisfies
\begin{equation}\label{eq:alignment}
  \langle \mathbf{g}_{\mathrm{FM}},\, \mathbf{g}_{\mathrm{c}} \rangle
  \; \propto
  -\|\mathbf{u}^{+} - \mathbf{v}\|^2
  \;+\;
  \langle \mathbf{u}^{+} - \mathbf{v},\, \boldsymbol{\delta} \rangle.
\end{equation}
\end{proposition}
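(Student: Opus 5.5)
The identity is pure algebra, so I will take the gradient directions exactly as given in Equations~\ref{eq:fm_grad} and~\ref{eq:contrast_grad}. The positive proportionality constants are absorbed into the $\propto$ sign. Concretely, for the objective in Equation~\ref{eq:delta_fm}, differentiating with respect to the prediction $\mathbf{v}$ and taking the descent direction gives the flow-matching contribution $2(\mathbf{u}^{+}-\mathbf{v})$. The contrastive contribution is $2\lambda(\mathbf{v}-\mathbf{u}^{-})$. The constants $2$ and $2\lambda$ are nonnegative (and positive whenever $\lambda>0$), so they rescale the inner product without changing its sign. This is why the statement is phrased as a proportionality rather than an equality.

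The key step is to rewrite the contrastive direction in terms of the flow-matching residual. Set $\mathbf{r}=\mathbf{u}^{+}-\mathbf{v}$. Adding and subtracting $\mathbf{u}^{+}$ gives
\[
\mathbf{v}-\mathbf{u}^{-}=(\mathbf{v}-\mathbf{u}^{+})+(\mathbf{u}^{+}-\mathbf{u}^{-})=-\mathbf{r}+\boldsymbol{\delta}.
\]
Bilinearity of the inner product then yields $\langle \mathbf{r},-\mathbf{r}+\boldsymbol{\delta}\rangle=-\|\mathbf{r}\|^2+\langle \mathbf{r},\boldsymbol{\delta}\rangle$. This is exactly Equation~\ref{eq:alignment} once the positive constant is restored.

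There is no real obstacle here. The one point needing care is the interpretation of the gradients. The objects being compared are gradients in output (velocity) space, with $\mathbf{v}$ as the variable, not parameter-space gradients. To extend the claim to $\theta$, I would add a remark that both parameter gradients are pulled back through the same Jacobian $J=\partial v_\theta/\partial\theta$. Their inner product is then $\mathbf{r}^\top JJ^\top(-\mathbf{r}+\boldsymbol{\delta})$. This matches Equation~\ref{eq:alignment} only when $JJ^\top$ is roughly isotropic, so I would state the proposition in velocity space, as the paper does.

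Finally, I would extract the alignment condition used later, Equation~\ref{eq:alignment_condition}. The gradients cooperate iff $\langle \mathbf{r},\boldsymbol{\delta}\rangle>\|\mathbf{r}\|^2$. By Cauchy--Schwarz, this requires $\|\boldsymbol{\delta}\|>\|\mathbf{r}\|$. Small velocity gaps, as arise under semantic proximity, therefore force conflict unless $\boldsymbol{\delta}$ is well aligned with the residual.
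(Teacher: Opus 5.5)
Your proof is correct and follows the paper's argument: the paper likewise substitutes $\mathbf{u}^{-}=\mathbf{u}^{+}-\boldsymbol{\delta}$ into $\mathbf{g}_{\mathrm{c}}$ and expands the inner product, which is your step $\mathbf{v}-\mathbf{u}^{-}=-\mathbf{r}+\boldsymbol{\delta}$ followed by bilinearity. Your added caveat is also correct and worth keeping: the identity holds in velocity space, and it transfers to parameter-space gradients only through $J J^\top$, a point the paper does not make explicit.
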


\begin{proof}
Substituting $\mathbf{u}^{-} = \mathbf{u}^{+} - \boldsymbol{\delta}$ into $\mathbf{g}_{\mathrm{c}}$:
\[
  \langle \mathbf{g}_{\mathrm{FM}}, \mathbf{g}_{\mathrm{c}} \rangle
  = \langle \mathbf{u}^{+} - \mathbf{v},\; \mathbf{v} - \mathbf{u}^{+} + \boldsymbol{\delta} \rangle
  = -\|\mathbf{u}^{+} - \mathbf{v}\|^2 + \langle \mathbf{u}^{+} - \mathbf{v},\, \boldsymbol{\delta} \rangle.
\]
\end{proof}

\noindent Equation~\ref{eq:alignment} reveals two competing terms:

\begin{enumerate}
  \item A \textbf{self-interference} term $-\|\mathbf{u}^{+} - \mathbf{v}\|^2$, which is always negative and represents the contrastive update working \emph{against} the flow-matching objective. This term dominates whenever the model has not yet converged ($\mathbf{v} \neq \mathbf{u}^{+}$).
  \item A \textbf{separation} term $\langle \mathbf{u}^{+} - \mathbf{v},\, \boldsymbol{\delta} \rangle$, which is positive when the velocity gap $\boldsymbol{\delta}$ is aligned with the flow-matching residual. This term provides a useful contrastive signal.
\end{enumerate}

\noindent The gradients are aligned ($\langle \mathbf{g}_{\mathrm{FM}}, \mathbf{g}_{\mathrm{c}} \rangle > 0$) only when the separation term dominates:
\begin{equation}\label{eq:alignment_condition}
  \langle \mathbf{u}^{+} - \mathbf{v},\, \boldsymbol{\delta} \rangle > \|\mathbf{u}^{+} - \mathbf{v}\|^2.
\end{equation}
When the anchor and negative conditions are semantically similar, their velocity fields share structure and $\|\boldsymbol{\delta}\| \to 0$, making the left-hand side vanish while the right-hand side remains positive. The inner product becomes $\langle \mathbf{g}_{\mathrm{FM}}, \mathbf{g}_{\mathrm{c}} \rangle \to -\|\mathbf{u}^{+} - \mathbf{v}\|^2 < 0$: the contrastive gradient directly opposes flow matching. Conversely, semantically distant negatives produce large $\|\boldsymbol{\delta}\|$, satisfying Equation~\ref{eq:alignment_condition} and yielding a cooperative training signal.

\paragraph{Implication for negative sampling.}
The analysis suggests a two-regime design. \emph{Distant} negatives (large $\|\boldsymbol{\delta}\|$) provide clean, globally cooperative contrastive gradients---suitable for establishing broad trajectory structure. \emph{Proximal} negatives are useful only when they differ along a specific, controlled axis (so that $\boldsymbol{\delta}$, though small in norm, is concentrated on the physics-relevant subspace), ensuring the separation term remains meaningful despite semantic overlap. This motivates the macro--micro decomposition described next.

\begin{figure*}[!htb]
    \centering
    \includegraphics[width=\textwidth]{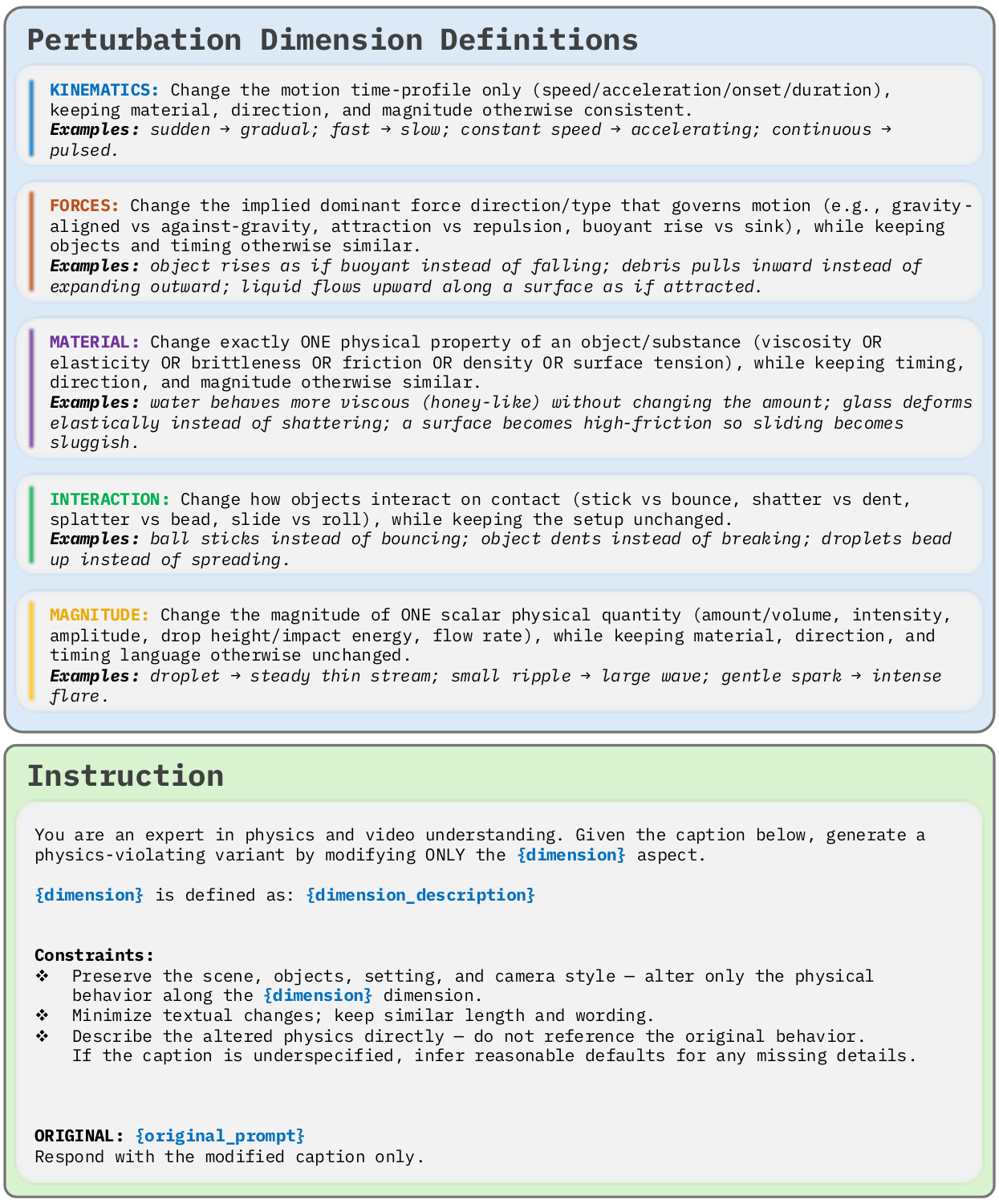}
    \caption{The prompt template used to generate the hard negatives.}
    \label{fig:prompt_template}
\end{figure*}

\begin{remark}[Why physics-concentrated $\boldsymbol{\delta}$ avoids conflict]
\label{rem:micro_alignment}
Proposition~\ref{prop:conflict} does not preclude useful contrastive 
learning at small $\|\boldsymbol{\delta}\|$; it requires only that the separation 
term $\langle \mathbf{u}^+ - \mathbf{v},\, \boldsymbol{\delta} \rangle$ dominate 
the self-interference term $\|\mathbf{u}^+ - \mathbf{v}\|^2$ 
(Equation~\ref{eq:alignment_condition}). The key observation is that for a 
well-pretrained video generator, the flow-matching residual 
$\mathbf{u}^+ - \mathbf{v}$ is itself concentrated on the \emph{physics-relevant 
subspace}: the base model already captures scene semantics, layout, and 
appearance with high fidelity, so its remaining prediction error is dominated 
by physically implausible dynamics (e.g., incorrect accelerations, 
inter-penetrations). When MiNS constructs a negative whose velocity gap 
$\boldsymbol{\delta}$ is aligned with this same subspace, the inner product 
$\langle \mathbf{u}^+ - \mathbf{v},\, \boldsymbol{\delta} \rangle \approx 
\|\mathbf{u}^+ - \mathbf{v}\|\ \|\boldsymbol{\delta}\|$ approaches its 
Cauchy--Schwarz upper bound, making Equation~\ref{eq:alignment_condition} 
satisfiable even at small $\|\boldsymbol{\delta}\|$---provided the residual 
norm $\|\mathbf{u}^+ - \mathbf{v}\|$ is itself small, as expected after 
large-scale pretraining. More precisely, denoting by $P_{\mathrm{phys}}$ the 
projection onto the physics-relevant subspace, the alignment condition 
relaxes to
\begin{equation}
\label{eq:relaxed_alignment}
\|P_{\mathrm{phys}}(\mathbf{u}^+ - \mathbf{v})\|\;\|\boldsymbol{\delta}\| 
\;\gtrsim\; \|P_{\mathrm{phys}}(\mathbf{u}^+ - \mathbf{v})\|^2 
+ \|P_{\perp}(\mathbf{u}^+ - \mathbf{v})\|^2,
\end{equation}
where $P_{\perp} = I - P_{\mathrm{phys}}$. Since pretraining minimizes the 
semantic residual $\|P_{\perp}(\mathbf{u}^+ - \mathbf{v})\|$, the 
right-hand side is dominated by the physics residual, and the condition 
reduces approximately to $\|\boldsymbol{\delta}\| \gtrsim 
\|P_{\mathrm{phys}}(\mathbf{u}^+ - \mathbf{v})\|$---a much weaker 
requirement than the $\|\boldsymbol{\delta}\| \gtrsim \|\mathbf{u}^+ - 
\mathbf{v}\|$ needed for unstructured negatives. This is the regime 
targeted by MiNS: small but physics-aligned $\boldsymbol{\delta}$, 
operating on a model whose residual error is concentrated in the same 
subspace.
\end{remark}

\section{Perturbation Instructions, Examples, and Minimality Statistics}
\label{append:perturbation_examples}

To generate physics-perturbed prompts for hard negative video synthesis, we use the template shown in Figure~\ref{fig:prompt_template}. The targeted physics dimension and its definition are provided to Qwen2.5-7B-Instruct alongside a set of constraints (Rules 1–4 in the template). Without these constraints, the LLM tends to reference both the original and modified physics (e.g., "the person runs instead of walking"), producing prompts that leak the anchor's physical behavior. The constraints also reduce large-scale semantic rewrites that would alter scene content beyond the targeted axis.
For each anchor prompt, we generate 10 candidate perturbations with the physics dimension sampled uniformly at random. We then compute text embeddings via the frozen T5 encoder of Wan 2.1, apply mean pooling, and measure cosine similarity to the anchor embedding. Candidates with similarity below 0.87 are discarded, as these typically reflect modifications to scene content rather than isolated physics changes. From the surviving candidates, we select the top 3 by cosine similarity, prioritizing the most semantically conservative perturbations to maximize physics isolation. This yields 33,897 hard-negative prompts in the training set.

\section{Sensitivity to Loss Weighting Coefficients}
\label{append:lambda_sensitivity}

We tune the loss coefficients in a sequential, greedy fashion: first $\lambda_{\mathrm{anc}}$ (with contrastive terms disabled), then $\lambda_{\mathrm{rand}}$ and $\lambda_{\mathrm{hard}}$ in turn, each time fixing the previously selected values.
 
\paragraph{Distributional anchoring strength ($\lambda_{\mathrm{anc}}$).}
We first identify an anchoring coefficient that enables successful SFT without catastrophic forgetting. As shown in Table~\ref{tab:lambda_anc}, $\lambda_{\mathrm{anc}}{=}0.2$ yields the best VideoPhy average; lower values under-regularize (SA drops sharply), while higher values over-constrain the model, limiting PC gains.

\begin{table}[t]
\centering
\caption{\textbf{Sensitivity to distributional anchoring strength $\lambda_{\mathrm{anc}}$.}
SFT only, no contrastive terms.}
\label{tab:lambda_anc}
\setlength{\tabcolsep}{12pt}
\renewcommand{\arraystretch}{1.25}
\begin{tabular}{c ccc}
\toprule
\multirow{2}{*}{$\lambda_{\mathrm{anc}}$} & \multicolumn{3}{c}{\textbf{VideoPhy} (\%) $\uparrow$} \\
\cmidrule(lr){2-4}
 & SA & PC & AVG \\
\midrule
0.1 & 48.57 & 33.50 & 41.04 \\
\rowcolor{gray!10}
\textbf{0.2} & \textbf{50.38} & \textbf{34.28} & \textbf{42.33} \\
0.3 & 50.40 & 31.69 & 41.05 \\
\bottomrule
\end{tabular}
\end{table}

\paragraph{Macro-contrastive strength ($\lambda_{\mathrm{rand}}$).}
With $\lambda_{\mathrm{anc}}{=}0.2$ and $\lambda_{\mathrm{hard}}{=}0.01$ fixed, we sweep the MaNS coefficient. Table~\ref{tab:lambda_rand} shows a clear trade-off: too small a value ($0.001$) provides negligible trajectory separation, while too large a value ($0.02$) over-separates trajectories, causing SA to degrade. The selected value $\lambda_{\mathrm{rand}}{=}0.005$ balances both metrics.

\begin{table}[t]
\centering
\caption{\textbf{Sensitivity to macro-contrastive strength $\lambda_{\mathrm{rand}}$.}
$\lambda_{\mathrm{anc}}{=}0.2$, $\lambda_{\mathrm{hard}}{=}0.01$.}
\label{tab:lambda_rand}
\setlength{\tabcolsep}{12pt}
\renewcommand{\arraystretch}{1.25}
\begin{tabular}{c ccc}
\toprule
\multirow{2}{*}{$\lambda_{\mathrm{rand}}$} & \multicolumn{3}{c}{\textbf{VideoPhy} (\%) $\uparrow$} \\
\cmidrule(lr){2-4}
 & SA & PC & AVG \\
\midrule
0.001 & 50.52 & 35.48 & 43.00 \\
\rowcolor{gray!10}
\textbf{0.005} & \textbf{50.65} & 36.47 & \textbf{43.56} \\
0.02  & 49.83 & \textbf{36.82} & 43.33 \\
\bottomrule
\end{tabular}
\end{table}
 
\paragraph{Micro-contrastive strength ($\lambda_{\mathrm{hard}}$).}
With $\lambda_{\mathrm{anc}}{=}0.2$ and $\lambda_{\mathrm{rand}}{=}0.005$ fixed, we sweep the MiNS coefficient. As shown in Table~\ref{tab:lambda_hard}, $\lambda_{\mathrm{hard}}$ tolerates a wider range than $\lambda_{\mathrm{rand}}$ because the physics-perturbed negatives concentrate $\boldsymbol{\delta}$ in the physics-relevant subspace, reducing gradient conflict by construction (Remark~\ref{rem:micro_alignment}). Nevertheless, excessively large values ($0.1$) destabilize training, degrading both SA and PC.

\begin{table}[t]
\centering
\caption{\textbf{Sensitivity to micro-contrastive strength $\lambda_{\mathrm{hard}}$.}
$\lambda_{\mathrm{anc}}{=}0.2$, $\lambda_{\mathrm{rand}}{=}0.005$.}
\label{tab:lambda_hard}
\setlength{\tabcolsep}{12pt}
\renewcommand{\arraystretch}{1.25}
\begin{tabular}{c ccc}
\toprule
\multirow{2}{*}{$\lambda_{\mathrm{hard}}$} & \multicolumn{3}{c}{\textbf{VideoPhy} (\%) $\uparrow$} \\
\cmidrule(lr){2-4}
 & SA & PC & AVG \\
\midrule
0.005 & 50.61 & 35.97 & 43.29 \\
\rowcolor{gray!10}
\textbf{0.02} & \textbf{50.78} & \textbf{37.46} & \textbf{44.12} \\
0.1  & 49.18 & 37.03 & 43.11 \\
\bottomrule
\end{tabular}
\end{table}

\section{Training and Inference Setup}
\label{append:train_inference}

We fine-tune the full transformer network in \texttt{bf16} mixed precision. Loss values greater than 50 are masked out to stabilize training, as recommended by the VideoX-Fun framework. No learning rate warm-up or decay schedule is used; the constant rate of $1 \times 10^{-6}$ provides sufficient stability given the regularizing effect of the distributional anchoring term. The frozen reference model $\theta_{\mathrm{ref}}$ shares the forward pass with the positive samples only, so the anchoring loss (Equation~\ref{eq:kl_anchor}) introduces no additional inference cost per step for negatives.

For semantic clustering in MaNS, we run $k$-means with 50 random restarts on mean-pooled T5 embeddings and select the partition with the lowest inertia. Cluster assignments are computed once before training and remain fixed throughout. At inference time, we use the Euler Discrete Scheduler with 50 denoising steps and a classifier-free guidance scale of 5.0. No test-time augmentation, prompt refinement, or ensembling is applied. Each video generation takes approximately 85 seconds on a single H200 GPU. For all benchmark evaluations, we generate one video per prompt using a fixed random seed.

\section{Training Cost and Memory Usage}
\label{append:training_cost}

A practical advantage of contrastive flow matching over preference-based alternatives is its minimal training overhead.
DPO-based methods require each negative sample to pass through both the trainable model~$\theta$ and a frozen reference
model~$\theta_{\text{ref}}$, effectively doubling forward-pass cost and memory. In \ourmethod, negatives contribute only
precomputed target velocities~$\mathbf{u}^{-}$ to the contrastive loss (Eqs.~\ref{eq:loss_rand},~\ref{eq:loss_hard});
no additional forward pass is needed. The single reference-model pass used for distributional anchoring (Eq.~\ref{eq:kl_anchor}) is shared with the positive sample, equivalent to the KL penalty in standard regularized fine-tuning.

Hard-negative videos and their latent representations are generated offline as a one-time preprocessing step, requiring
approximately \textbf{13}~GPU-hours on 4~H200 GPUs. Once precomputed, per-step cost is nearly identical to SFT in wall-clock time, as shown in Table~\ref{tab:training_cost}. Memory increases by ${\sim}7$\,GB ($+31\%$) due to loading the cached negative
latents, but remains well within single-GPU capacity and substantially below the footprint a full DPO setup would
require (which must maintain a complete copy of $\theta_{\text{ref}}$ in memory throughout training).

\begin{table}[t]
\centering
\caption{\textbf{Per-iteration training cost.}
Both methods use Wan 2.1-1.3B at $480 \times 832$ resolution (81 frames), global batch size 8, DeepSpeed ZeRO-2 on H200 GPUs.}
\label{tab:training_cost}
\setlength{\tabcolsep}{12pt}
\renewcommand{\arraystretch}{1.25}
\begin{tabular}{l cc}
\toprule
\multirow{2}{*}{\textbf{Method}} & \multicolumn{2}{c}{\textbf{Training Cost} $\downarrow$} \\
\cmidrule(lr){2-3}
 & Mem.\,(GB) & Time\,(s/iter) \\
\midrule
SFT         & 23.29 & 4.94 \\
\ourmethod  & 30.45 & 4.95 \\
\bottomrule
\end{tabular}
\end{table}

\section{Convergence Analysis and Training Signal Longevity}
\label{append:training_dynamics}

A central concern for any post-training method is whether its learning signal provides sustained value throughout optimization or saturates early once the model adapts to the fine-tuning distribution. If the signal saturates, additional training steps yield diminishing returns and risk overfitting to the fine-tuning data without further improving the target capability. We investigate this question by comparing the convergence behavior of DiReCT and supervised fine-tuning (SFT) on VideoPhy over 15,000 training steps, evaluating at regular intervals. Both methods use identical training data, base model, and optimization hyperparameters (Section~\ref{sec:impl}); the only difference is the loss function.

\paragraph{SFT saturates early.} As shown in Figure~\ref{fig:training_dynamics}, SFT exhibits rapid initial improvement during the first 3,000--5,000 steps as the model adapts to the fine-tuning data distribution. Beyond this point, however, progress stalls: the best SA and PC scores achieved by SFT represent gains of only +0.32 and +0.48, respectively, over the zero-shot baseline. This plateau is consistent with the nature of the reconstruction objective—once the model has learned to reproduce the training videos with low per-frame error, the loss provides no additional gradient signal for distinguishing physically plausible dynamics from implausible ones. The velocity field converges toward the conditional mean of nearby trajectories, which is precisely the mode-averaging behavior identified in Section~\ref{sec:multi_scale} as the root cause of physical violations.

\paragraph{DiReCT provides a sustained learning signal.} In contrast, DiReCT continues to improve steadily throughout the entire 15,000-step training window. This sustained improvement can be attributed to the complementary nature of its loss components. The flow-matching term drives early convergence on reconstruction quality, similar to SFT. As the model improves and the reconstruction residual $\|\mathbf{u}^+ - \mathbf{v}\|$ shrinks, the contrastive terms become increasingly effective: the alignment condition (Equation~\ref{eq:alignment_condition}) is more easily satisfied when the residual is small (Remark~\ref{rem:micro_alignment}), allowing the macro- and micro-contrastive gradients to steer the velocity field toward physics-consistent trajectories without opposing reconstruction. This creates a natural curriculum: reconstruction first, then physics refinement.

\paragraph{The gap widens over time.} A notable feature of Figure~\ref{fig:training_dynamics} is that the performance gap between DiReCT and SFT does not merely persist but actively \emph{widens} as training progresses. This divergence is most pronounced on Physical Commonsense (PC), where the gap grows from a marginal difference at 3,000 steps to a substantial lead by 15,000 steps. SA exhibits a similar but more moderate trend, consistent with our earlier observation that semantic quality is primarily governed by the reconstruction term (which both methods share) while physics discrimination depends on the contrastive signal unique to DiReCT. The widening gap provides evidence that the contrastive objective introduces a genuinely complementary learning signal—one that the reconstruction loss alone cannot replicate regardless of training duration.

\paragraph{Stabilization without overfitting.} DiReCT stabilizes around 15,000 steps, with scores plateauing rather than declining. This indicates that the distributional anchoring term (Equation~\ref{eq:kl_anchor}) successfully prevents catastrophic forgetting even over extended training, keeping the model within the pretrained generation manifold while allowing targeted physics improvements. The absence of performance degradation at convergence further validates our choice of $\lambda_{\mathrm{anc}} = 0.2$ (see Appendix~\ref{append:lambda_sensitivity} for sensitivity analysis).


\section{Sensitivity to Semantic Partition Granularity}
\label{append:clustering}

Table~\ref{tab:cluster_sensitivity} reports VideoPhy scores as we vary the number of semantic clusters $K$ used in macro-contrastive negative sampling (MaNS). Performance follows a clear inverted-U trend, peaking at $K{=}32$ across all three attributes. At coarse granularities ($K{=}8$), individual clusters span semantically diverse prompts, so partition-exclusive negatives can still share substantial velocity-field structure with the anchor. This violates the large-$\|\boldsymbol{\delta}\|$ assumption underlying Proposition~1, reintroducing the gradient conflict that MaNS is designed to prevent; accordingly, Physical Commonsense drops sharply to 34.52\%, a reduction of 3.64 percentage points relative to the optimum. Doubling to $K{=}16$ narrows intra-cluster diversity and recovers much of the gap, yet residual semantic overlap within clusters still limits the contrastive signal.

At fine granularities ($K{=}64, 128$), clusters become small and their boundaries increasingly driven by embedding noise rather than genuine semantic structure. This fragmentation reduces the effective diversity of the excluded negative pool and introduces noisy partition assignments that weaken the theoretical guarantee of cross-cluster separation. The decline, however, is more gradual than in the low-$K$ regime: $K{=}64$ retains an AVG of 43.92\%, only 0.79 points below the optimum, suggesting that even imperfect partitions still provide a useful coarse separation signal. $K{=}128$ degrades further to 42.95\%, confirming that excessive fragmentation is detrimental.

Notably, Semantic Adherence (SA) remains relatively stable across all settings (49.87--51.26\%), consistent with SA being driven primarily by the reconstruction term rather than the contrastive objective. Physical Commonsense (PC), which depends most directly on clean trajectory separation, exhibits the strongest sensitivity to $K$, spanning a 3.64-point range. This asymmetry corroborates our claim that MaNS primarily targets physics-relevant structure in the velocity field. Based on these results, we adopt $K{=}32$ as the default throughout all experiments.

\begin{table}[t]
\centering
\caption{\textbf{Effect of cluster count $K$ in macro-contrastive sampling (MaNS).}
We vary the number of semantic clusters used for group-exclusive negative
sampling. Too few clusters risk intra-cluster semantic diversity (gradient
conflict); too many produce overly fragmented groups with noisy boundaries.}
\label{tab:cluster_sensitivity}
\setlength{\tabcolsep}{12pt}
\renewcommand{\arraystretch}{1.25}
\begin{tabular}{c ccc}
\toprule
\multirow{2}{*}{$K$} & \multicolumn{3}{c}{\textbf{VideoPhy} (\%) $\uparrow$} \\
\cmidrule(lr){2-4}
 & SA & PC & AVG \\
\midrule
8    & 49.87 & 34.52 & 42.20 \\
16   & 50.61 & 36.38 & 43.50 \\
\rowcolor{gray!10}
\textbf{32}   & \textbf{51.26} & \textbf{38.16} & \textbf{44.71} \\
64   & 50.94 & 36.89 & 43.92 \\
128  & 50.18 & 35.71 & 42.95 \\
\bottomrule
\end{tabular}
\end{table}

\begin{table}[t]
\centering
\caption{%
  Report of gradient alignment between flow-matching and contrastive loss terms for $\Delta$FM and \ourmethod according to the cosine similarity metric.
}
\label{tab:gradient_alignment}
\small
\setlength{\tabcolsep}{4pt}
\begin{tabular}{@{}l c@{}}
\toprule
\textbf{Method}
  & \textbf{Cosine} $\uparrow$ \\
\midrule
$\Delta$FM (random neg.)
  &$0.12 \pm 0.21$ \\
\ourmethod{}-MaNS (ours)
  &$-0.05 \pm 0.12$ \\
\bottomrule
\end{tabular}
\end{table}

\section{Gradient Alignment Measurement}\label{append:gradient_alignment}

To evaluate the effectiveness of our proposed sampling approach in resolving the semantic/physics entanglement, we define a set of metrics that mainly work with gradients of the flow-matching term on the positive sample and the contrastive term for random negatives (MaNS). At each training step, we perform two independent backward passes through the set of trainable parameters~$\phi$:

\begin{enumerate}
    \item A backward pass on the flow-matching loss $\mathcal{L}_{\mathrm{FM}}$, producing $\mathbf{g}_{\mathrm{FM}} = \nabla_\phi \mathcal{L}_{\mathrm{FM}}$. 
    \item A backward pass on the weighted contrastive loss $\lambda\mathcal{L}_{\mathrm{c}}$, producing
      $\mathbf{g}_{\mathrm{c}} = \nabla_\phi(\lambda \mathcal{L}_{\mathrm{c}})
       = \lambda\,\nabla_\phi \mathcal{L}_{\mathrm{c}}$. Here, the contrastive loss refers to the loss penalizing velocity field similarity for random negatives. 
\end{enumerate}

\noindent
Because the total loss is $\mathcal{L}_{\mathrm{total}} = \mathcal{L}_{\mathrm{FM}} - \lambda\mathcal{L}_{\mathrm{c}}$, the combined parameter update direction is

\begin{equation}
\label{eq:g_total}
\mathbf{g}_{\mathrm{total}}
  = \mathbf{g}_{\mathrm{FM}} - \mathbf{g}_{\mathrm{c}}.
\end{equation}

To ensure fairness, we utilize the same dataset and hyperparameters, calculating the metrics over the first 1000 steps of the training process. In the next section, we first describe the metrics and then provide a comprehensive comparison of \(\Delta\)FM and \ourmethod across these metrics. 

A natural measure of gradient interaction is the cosine similarity between the flow-matching and contrastive gradients:
\begin{equation}
\label{eq:cosine_fm_c}
\mathrm{cos}(\mathbf{g}_{\mathrm{FM}},\,\mathbf{g}_{\mathrm{c}})
  = \frac{\mathbf{g}_{\mathrm{FM}} \cdot \mathbf{g}_{\mathrm{c}}}
         {\|\mathbf{g}_{\mathrm{FM}}\|\;\|\mathbf{g}_{\mathrm{c}}\|}.
\end{equation}
This metric captures directional alignment of gradients. Because the contrastive term is subtracted in Equation~\eqref{eq:g_total}, a positive cosine indicates that the two raw gradients point in the same direction, so the subtraction partially cancels $\mathbf{g}_{\mathrm{FM}}$---constituting opposition in the combined update---while a negative cosine indicates that the subtraction reinforces $\mathbf{g}_{\mathrm{FM}}$. 

As established in Proposition~\ref{prop:conflict}, the self-interference term $-\|\mathbf{u}^{+} - \mathbf{v}\|^2$ in
Equation~\ref{eq:alignment} introduces an inherent level of opposition between the contrastive and flow-matching gradients
that persists regardless of the sampling strategy.  The critical question is therefore not whether opposition exists, but whether
The contrastive term provides sufficient complementary information to justify this cost.  A cosine of $+1$ represents the worst case: the contrastive gradient is fully aligned with the flow-matching gradient, so the subtraction directly opposes reconstruction without introducing any new learning signal, pure cost with no benefit.  A cosine of $-1$ represents the opposite degenerate case: the contrastive term merely reinforces the flow-matching direction, providing redundant information rather than a complementary signal.  The most productive regime lies between these extremes: the contrastive gradient deviates from the flow-matching direction sufficiently to introduce complementary information, such as physics-specific corrections, while the inherent opposition remains bounded relative to the useful orthogonal signal. 
As reported in Table~\ref{tab:gradient_alignment}, $\Delta$FM exhibits a mean cosine of $+0.12$, confirming that random negatives induce the gradient opposition predicted by Proposition~\ref{prop:conflict}: the contrastive and flow-matching gradients partially align, so the subtraction in Eq.~\eqref{eq:g_total} actively opposes reconstruction. \ourmethod{} shifts this to $-0.05$, crossing into the cooperative regime where the subtracted contrastive term reinforces rather than counteracts the flow-matching update. This sign reversal indicates that the structured negatives produced by MaNS redirect the contrastive gradient away from the reconstruction axis and toward complementary, physics-specific corrections. The nearly twofold reduction in standard deviation ($0.12$ vs.\ $0.21$) further suggests that \ourmethod produces more stable gradient interactions across training steps.


\section{Additional Qualitative Examples}
\label{append:qualitative}

Figure~\ref{fig:oursvsbaseline2} provides additional comparisons between DiReCT and the baseline across diverse physical scenarios beyond those shown in the main paper.

\begin{figure*}[t]
    \centering
    \includegraphics[width=\textwidth]{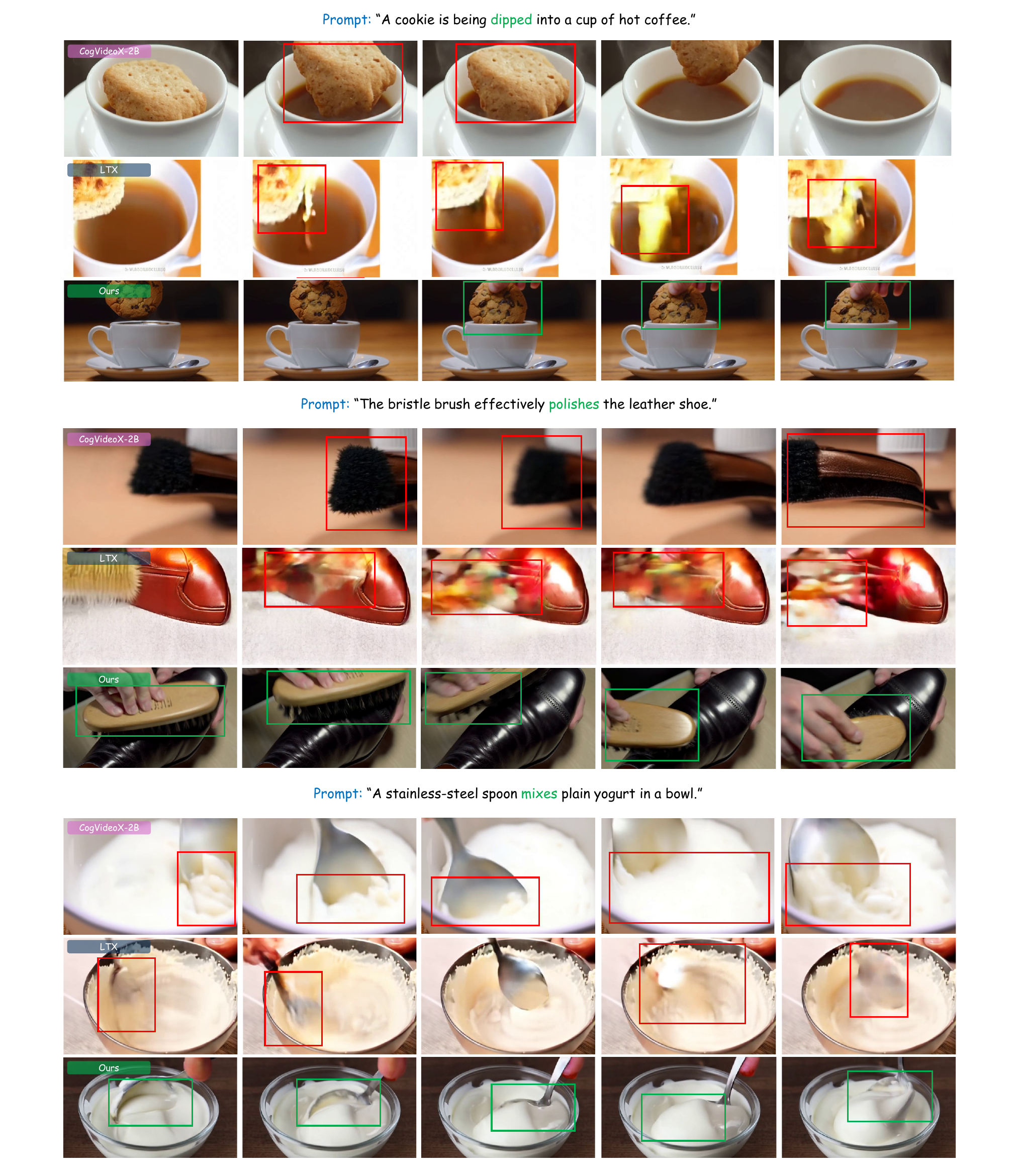}
    \caption{\textbf{Additional qualitative comparisons with CogVideoX-2B and
LTX on VideoPhy prompts.} Red boxes highlight physical
violations in baseline outputs; green boxes indicate
physically consistent behavior produced by \ourmethod{}.
In the top example, LTX causes the cookie to
dissolve and lose structural integrity upon contact with
the coffee, while CogVideoX-2B drops the cookie with no hand
visible in frame; \ourmethod{} preserves the cookie's
shape throughout the dipping motion with a plausible
hand--cookie interaction. In the middle example,
CogVideoX-2B fails to render the shoe entirely, showing
only a brush, and LTX exhibits severe visual degradation
with color bleeding and loss of object boundaries;
\ourmethod{} maintains coherent brush--shoe contact with a
realistic polishing motion. In the bottom example,
LTX produces unnatural spoon--yogurt
interactions with the spoon merging into the mixture,
while CogVideoX-2B generates implausible yogurt rigidity;
\ourmethod{} produces a physically plausible stirring
motion in which the spoon and yogurt maintain distinct
material properties throughout.}
    \label{fig:oursvsbaseline2}
\end{figure*}

\end{document}